\documentclass[letterpaper, 10pt, conference]{IEEEtran}

\IEEEoverridecommandlockouts

\usepackage{amsmath,amssymb,amsfonts}
\usepackage{graphicx}
\usepackage{cite}
\usepackage{algorithmic}
\usepackage{algorithm}
\usepackage{textcomp}
\usepackage{xcolor}
\usepackage{hyperref}
\usepackage{booktabs}
\usepackage{multirow}
\usepackage[flushleft]{threeparttable}
\newtheorem{theorem}{Theorem}
\newtheorem{proof}{Proof}

\title{\vspace*{0.25in}\LARGE \bf
Robust Multi-Agent Target Tracking in Intermittent Communication Environments via Analytical Belief Merging
}

\author{
Mohamed Abdelnaby, Samuel Honor, and Kevin Leahy%
\thanks{The authors are with the Department of Robotics Engineering, Worcester Polytechnic Institute (WPI), Worcester, MA 01609, USA.
        {\tt\small \{mabdelnaby, sjhonor, kleahy\}@wpi.edu. This work was supported by ONR under grant number N00014-25-1-2225.}}%
}

\begin{document}

\maketitle
\thispagestyle{empty}
\pagestyle{empty}

\begin{abstract}
Autonomous multi-agent target tracking in GPS-denied and communication-restricted environments (e.g., underwater exploration, subterranean search and rescue, and adversarial domains) forces agents to operate independently and only exchange information during brief reconnection windows. Because transmitting complete observation and trajectory histories is bandwidth-exhaustive, exchanging probabilistic belief maps serves as a highly efficient proxy that preserves the topology of agent knowledge. While minimizing divergence metrics to merge these decentralized beliefs is conceptually sound, traditional approaches often rely on numerical solvers that introduce critical quantization errors and artificial noise floors. In this paper, we formulate the decentralized belief merging problem as Forward and Reverse Kullback-Leibler (KL) divergence optimizations and derive their exact closed-form analytical solutions. By deploying these derivations, we mathematically eliminate optimization artifacts, achieving perfect mathematical fidelity while reducing the computational complexity of the belief merge to $\mathcal{O}(N|S|)$ scalar operations. Furthermore, we propose a novel spatially-aware visit-weighted KL merging strategy that dynamically weighs agent beliefs based on their physical visitation history. Validated across tens of thousands of distributed simulations, extensive sensitivity analysis demonstrates that our proposed method significantly suppresses sensor noise and outperforms standard analytical means in environments characterized by highly degraded sensors and prolonged communication intervals.
\end{abstract}

\section{Introduction}
The deployment of multi-agent systems in real-world environments often necessitates operation under severe communication constraints. Multi-robot teams have become increasingly vital for high-stakes applications such as post-disaster subterranean search and rescue \cite{murphy2014disaster}, adversarial pursuit-evasion in GPS-denied zones \cite{chung2011search}, and cooperative underwater exploration \cite{akyildiz2005underwater}. In these domains, continuous communication is functionally impossible. Agents must operate independently for extended periods, updating their local probabilistic beliefs based on noisy sensor data. When brief, intermittent communication windows finally occur, agents must rapidly synchronize their accumulated knowledge \cite{thrun2005probabilistic}. However, transmitting an agent's entire history of trajectories and raw observations is bandwidth-exhaustive and computationally prohibitive. Instead, exchanging local probabilistic belief maps serves as an elegant, compressed proxy.

The core challenge in this proxy approach is the accurate reconciliation of conflicting probabilistic maps. To address the specific challenges of decentralized tracking and belief reconciliation, contemporary literature heavily relies on either Decentralized Partially Observable Markov Decision Process (Dec-POMDP) formalisms \cite{oliehoek2016concise, kaelbling1998planning, amato2013decentralized, lauri2020multi} or Deep Reinforcement Learning (DRL) \cite{lowe2017multi, shi2023cooperative}. However, these methods often rely on brittle assumptions. Dec-POMDP solvers typically require complete prior knowledge of the target's transition dynamics and the environment's reward function—an unrealistic assumption for uncooperative or unpredictable targets. Conversely, DRL approaches function as uninterpretable ``black boxes'' that are highly vulnerable to distributional shifts, such as sudden spikes in sensor degradation. Preliminary empirical evaluations in this study indicate that exchanging local probabilistic belief maps serves as an elegant, compressed proxy that preserves the spatial topology of individual agent knowledge without the overhead of raw data transmission.

In contrast, we propose a ``white-box,'' zero-shot analytical approach that makes no assumptions about the target's underlying Markov Decision Process (MDP). We establish that while minimizing the Kullback-Leibler (KL) divergence is mathematically optimal for belief aggregation, the standard computational implementation of this minimization via linear solvers is fundamentally flawed. 

This paper presents three primary contributions:
\begin{enumerate}
    \item We formulate the decentralized belief merging process as discrete Forward and Reverse KL divergence optimization problems, and explicitly derive their closed-form analytical solutions (Arithmetic and Geometric means).
    \item We demonstrate that standard numerical solvers introduce quantization and bounding errors that severely degrade tracking performance compared to our exact analytical derivations.
    \item We introduce a visit-weighted KL merging technique that leverages the spatial history of the agents to filter out "sensor ghosts" (false positives) from degraded sensors. This method demonstrates superior robustness during long communication blackouts without relying on environmental assumptions or brittle learning-based approximations.
\end{enumerate}

The remainder of this paper is organized as follows: Section II formulates the target tracking and belief merging problem. Section III derives the analytical solutions for KL divergence, comparing them against numerical solvers. Section IV introduces our novel visit-weighted KL strategy. Section V details the planning architecture and provides a step-by-step walkthrough of the execution algorithm. Sections VI and VII present our methodological justification and large-scale experimental results, followed by concluding remarks in Section VIII.

\section{Problem Formulation}
We model the environment as a discrete state space $S$ where a team of $N$ agents seeks to locate a moving target. Let $x^*_t \in S$ denote the true state of the target at time $t$, and $x_{i,t} \in S$ denote the position of agent $i$ at time $t$.

\subsection{Sensor Model and Degradation}
At time $t$, each agent $i \in \{1, \dots, N\}$ receives a binary observation $z_{i,t} \in \{0, 1\}$ indicating the presence or absence of the target at its current location $x_{i,t}$. Based on this observation history, the agent maintains a local belief distribution $b_{i,t}(s) = P(x^*_t = s \mid z_{i, 1:t}, x_{i, 1:t})$, representing the probability that the target occupies state $s \in S$. 

The sensors are characterized by an observation model $O(z \mid x^*, x_i)$ defined by a false positive rate $\alpha$ and a false negative rate $\beta$:
\begin{align}
    P(z_{i,t}=1 \mid x^*_t = x_{i,t}) &= 1 - \beta \\
    P(z_{i,t}=1 \mid x^*_t \neq x_{i,t}) &= \alpha
\end{align}
Under highly degraded sensor settings, a high $\alpha$ leads to frequent ghost target sightings, while a high $\beta$ causes agents to overlook the true target. At each time step, an agent updates its local belief via Bayes' rule:
\begin{equation}
    b_{i,t}(s) = \eta \cdot P(z_{i,t} \mid x^*_t = s, x_{i,t}) \cdot b_{i,t-1}(s)
\end{equation}
where $\eta = \left( \sum_{s'} P(z_{i,t} \mid s', x_{i,t}) b_{i,t-1}(s') \right)^{-1}$ is the standard Bayesian normalization constant ensuring the distribution sums to 1.

\subsection{The Belief Merging Problem}
Given intermittent communication constraints, agents must operate independently for $k$ time steps. This tracking process is deeply interleaved and history-dependent. Because agents navigate and observe the environment independently between communication windows, their local beliefs diverge rapidly based on entirely distinct observation sequences and action trajectories. 

Upon reconnection, the team must fuse their local beliefs $\{b_1, b_2, \dots, b_N\}$ into a consensus distribution $b_{merged}(s)$ that accurately reflects the joint accumulated knowledge, without the exhaustive bandwidth overhead of transmitting raw historical data. To achieve this optimal fusion without incurring computational artifacts, we formalize the goal as follows:

\vspace{1.5mm}
\noindent \textbf{Problem Statement:} \textit{Given a team of $N$ agents with local beliefs $b_{i,t}$ formed via independent, history-dependent observation sequences over a communication blackout interval $k$, determine a computationally efficient consensus distribution $Q^*(s)$ (where $Q^* = b_{merged}$) that optimally minimizes the loss of information relative to the joint local beliefs, while remaining robust to high rates of sensor false positives.}
\vspace{1.5mm}

We address this formulation by turning to the analytical derivations of divergence metrics in the following section.

\section{Analytical Solutions vs. Numerical Approximations}
To optimally fuse the diverging local beliefs of the agents, we must find a consensus distribution that minimizes the information lost during the merge. In information theory, the Kullback-Leibler (KL) divergence, denoted as $D_{KL}(P \parallel Q) = \sum_x P(x) \log \frac{P(x)}{Q(x)}$, is the foundational metric for measuring the distance between two probability distributions. Therefore, formulating the decentralized belief merging problem as a KL divergence optimization guarantees an information-theoretic optimal consensus.

However, implementing these optimizations via numerical linear solvers introduces systemic vulnerabilities in highly degraded domains. To resolve this, we present formal theorems proving their exact closed-form analytical solutions.

\subsection{Forward KL and the Arithmetic Mean}
The Forward KL merging objective seeks a candidate consensus distribution $Q$ (which ultimately serves as our optimal $b_{merged}$) that minimizes the weighted sum of divergences from the agents' local beliefs. Let $w_i \in [0, 1]$ represent the normalized scalar weight assigned to agent $i$, where $\sum_{i=1}^N w_i = 1$. The objective is:
\begin{equation} \label{eq:fwd_kl}
    \min_{Q} \sum_{i=1}^{N} w_i D_{KL}(b_i \parallel Q)
\end{equation}
Subject to the probability constraint $\sum_{s \in S} Q(s) = 1$.

\begin{theorem}
The strict mathematical minimum of the Forward KL divergence merging objective formulated in (\ref{eq:fwd_kl}) is exactly the Arithmetic Mean of the local beliefs.
\end{theorem}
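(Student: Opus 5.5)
The plan is to show that the objective in (\ref{eq:fwd_kl}) equals a cross-entropy term plus a constant that does not depend on $Q$, and then minimize that term over the probability simplex. Expanding each divergence gives
\begin{equation*}
\sum_{i=1}^N w_i D_{KL}(b_i \parallel Q) = \sum_{i=1}^N w_i \sum_{s\in S} b_i(s)\log b_i(s) \;-\; \sum_{s\in S} \bar b(s)\log Q(s),
\end{equation*}
where $\bar b(s) = \sum_{i=1}^N w_i b_i(s)$ is the weighted arithmetic mean. Because $w_i\ge 0$, $\sum_i w_i = 1$, and each $b_i$ is a distribution, $\bar b$ is itself a probability distribution on $S$. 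The first term is fixed once the beliefs are given. Minimizing the objective is therefore the same as minimizing the cross-entropy $-\sum_s \bar b(s)\log Q(s)$ over distributions $Q$.

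For the main step I will avoid Lagrange multipliers and add and subtract $\sum_s \bar b(s)\log \bar b(s)$. This gives
\begin{equation*}
\sum_{i=1}^N w_i D_{KL}(b_i \parallel Q) = C + D_{KL}(\bar b \parallel Q),
\end{equation*}
where $C = \sum_i w_i D_{KL}(b_i\parallel \bar b)$ does not depend on $Q$. By Gibbs' inequality, $D_{KL}(\bar b\parallel Q)\ge 0$, with equality if and only if $Q=\bar b$. So $Q^*=\bar b$ is the unique global minimizer, which is the claimed arithmetic mean. Gibbs' inequality itself follows from Jensen's inequality applied to the concave function $\log$. As a cross-check, the Lagrangian route gives $-\bar b(s)/Q(s)+\lambda=0$, so $Q(s)\propto \bar b(s)$, and the normalization constraint then forces $\lambda=1$.

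The main obstacle is handling supports and boundary cases correctly, since the identity above involves terms like $0\log 0$ and $\log Q(s)$ with $Q(s)=0$. I will adopt the conventions $0\log 0 = 0$ and $p\log(p/0)=+\infty$ for $p>0$. Under these conventions, any $Q$ that vanishes somewhere on the support of $\bar b$ gives an infinite objective, so it cannot be optimal. Every $Q$ that does not vanish there gives a finite objective, and the decomposition holds exactly. On states where $\bar b(s)=0$, the optimal $Q$ must also put zero mass, because any mass placed there strictly increases $D_{KL}(\bar b\parallel Q)$. Finally, I will note that uniqueness also follows from the strict convexity of $Q\mapsto -\sum_s \bar b(s)\log Q(s)$ on the support of $\bar b$. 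Evaluating $\bar b$ costs $N|S|$ multiply-adds, which justifies the $\mathcal{O}(N|S|)$ complexity claimed earlier.
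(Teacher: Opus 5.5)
Your proof is correct, but it takes a different route from the paper. The paper's entire argument is the Lagrangian computation, which you relegate to a cross-check. It sets $-\bar b(s)/Q(s)+\lambda=0$, sums over $s$ to get $\lambda=1$, and reads off $Q=\bar b$. Your main argument is instead the exact decomposition $\sum_i w_i D_{KL}(b_i \parallel Q) = \sum_i w_i D_{KL}(b_i \parallel \bar b) + D_{KL}(\bar b \parallel Q)$, combined with Gibbs' inequality.

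What your route buys:
\begin{itemize}
\item It certifies that $\bar b$ is the \emph{unique global} minimizer, which is what ``strict minimum'' in the statement asks for. The paper's first-order condition only identifies a stationary point. It then relies on convexity, which it never states, to call that point the minimum.
\item It handles the boundary cases. The Lagrangian step tacitly assumes $Q(s)>0$ and $\bar b(s)>0$ for every $s$. If $\bar b(s)=0$ at some state, the stationarity equation there reads $\lambda=0$, which is incompatible with the equations at the other states. One would then need KKT conditions with $Q(s)\ge 0$. Your conventions $0\log 0=0$ and $p\log(p/0)=+\infty$ cover this directly.
\item It identifies the optimal value as the weighted Jensen--Shannon divergence $\sum_i w_i D_{KL}(b_i\parallel\bar b)$.
\end{itemize}

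The paper's route is shorter and runs parallel to its Reverse KL proof (Theorem 2). Note that your add-and-subtract trick works there as well. Write the reverse objective as $D_{KL}(Q\parallel\tilde Q)-\log Z$, where $\tilde Q$ is the normalized geometric pool and $Z$ its normalizer. This gives the same global-uniqueness certificate for the Logarithmic Opinion Pool.
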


\begin{proof}
To minimize the Forward KL objective formulated in (\ref{eq:fwd_kl}) subject to the probability constraint $\sum_{s \in S} Q(s) = 1$, we expand the divergence definition and construct the Lagrangian:
\begin{equation}
    \mathcal{L} = \sum_{i=1}^{N} w_i \sum_{s \in S} b_i(s) \log \frac{b_i(s)}{Q(s)} + \lambda \left( \sum_{s \in S} Q(s) - 1 \right)
\end{equation}
Differentiating with respect to $Q(s)$ and setting the derivative to zero yields:
\begin{equation}
    \frac{\partial \mathcal{L}}{\partial Q(s)} = -\frac{1}{Q(s)} \sum_{i=1}^{N} w_i b_i(s) + \lambda = 0
\end{equation}
Rearranging to solve for $Q(s)$, we obtain:
\begin{equation}
    \lambda Q(s) = \sum_{i=1}^{N} w_i b_i(s)
\end{equation}
To find the Lagrange multiplier $\lambda$, we sum both sides over all states $s \in S$:
\begin{equation}
    \lambda \sum_{s \in S} Q(s) = \sum_{i=1}^{N} w_i \sum_{s \in S} b_i(s)
\end{equation}
Because $\sum_{s \in S} Q(s) = 1$, $\sum_{s \in S} b_i(s) = 1$, and the weights sum to unity ($\sum_{i=1}^N w_i = 1$), the equation simplifies to $\lambda (1) = 1$. Substituting $\lambda = 1$ back into the derivative solution provides the exact closed-form analytical minimum:
\begin{equation}
    Q(s) = \sum_{i=1}^{N} w_i b_i(s)
\end{equation}
This confirms the exact analytical solution is the Arithmetic Mean. $\blacksquare$
\end{proof}

\subsection{Reverse KL and the Logarithmic Opinion Pool}
While Forward KL is ``zero-avoiding'' (penalizing $Q$ for being zero where any $b_i$ is non-zero), the Reverse KL objective is ``zero-forcing,'' prioritizing a consensus that only holds mass where the agents strictly agree. We formulate the Reverse KL objective by minimizing the divergence from the consensus $Q$ to the individual agents:
\begin{equation} \label{eq:rev_kl}
    \min_{Q} \sum_{i=1}^{N} w_i D_{KL}(Q \parallel b_i)
\end{equation}

\begin{theorem}
The strict mathematical minimum of the Reverse KL divergence merging objective formulated in (\ref{eq:rev_kl}) is the Logarithmic Opinion Pool (a normalized geometric mean).
\end{theorem}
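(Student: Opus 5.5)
I would mirror the Lagrangian argument used for the Forward KL theorem, now with $Q$ appearing inside the logarithm. First, I expand the weighted objective in (\ref{eq:rev_kl}) as
\begin{equation*}
\sum_{i=1}^{N} w_i \sum_{s \in S} Q(s)\log\frac{Q(s)}{b_i(s)} = \sum_{s\in S} Q(s)\log Q(s) - \sum_{s \in S} Q(s)\sum_{i=1}^N w_i \log b_i(s),
\end{equation*}
using $\sum_i w_i = 1$. I then form $\mathcal{L} = \sum_s Q(s)\log Q(s) - \sum_s Q(s)\sum_i w_i\log b_i(s) + \lambda(\sum_s Q(s)-1)$. Differentiating in $Q(s)$ gives $\log Q(s) + 1 - \sum_i w_i \log b_i(s) + \lambda = 0$. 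Hence $Q(s) = e^{-1-\lambda}\prod_i b_i(s)^{w_i}$.

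Summing over $s$ and imposing $\sum_s Q(s)=1$ fixes $e^{-1-\lambda} = 1/Z$ with $Z=\sum_{s'}\prod_i b_i(s')^{w_i}$. This yields the normalized geometric mean
\begin{equation*}
Q^*(s) = \frac{\prod_{i=1}^N b_i(s)^{w_i}}{\sum_{s'\in S}\prod_{i=1}^N b_i(s')^{w_i}},
\end{equation*}
which is the Logarithmic Opinion Pool.

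To show this stationary point is the \emph{strict} minimum, and not merely a critical point, I would avoid second-order arguments and use a cleaner identity. Substituting $\sum_i w_i\log b_i(s) = \log Q^*(s) + \log Z$ into the expansion above shows the objective equals $D_{KL}(Q\parallel Q^*) - \log Z$. By Gibbs' inequality, this is minimized uniquely at $Q=Q^*$, and the minimum value is $-\log Z$. The identity also proves the Forward-KL-style stationarity calculation is legitimate, since convexity of $Q\log Q$ makes the objective strictly convex on the simplex.

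The main obstacle is boundary behavior. If some $b_i(s)=0$, then $\log b_i(s)=-\infty$ and the derivative is undefined. I would handle this by restricting $Q$ to the support $\bigcap_i \mathrm{supp}(b_i)$, since any $Q$ placing mass where some $b_i$ (with $w_i>0$) vanishes has infinite objective. The formula still holds with the convention $0^{w_i}=0$. This requires assuming $Z>0$, that is, the agents' supports overlap. If the supports do not overlap, every $Q$ gives infinite objective and no meaningful minimizer exists. In the intended application, the sensor model has $\alpha,\beta\in(0,1)$, so Bayesian beliefs stay strictly positive and this case does not arise. I would note this assumption explicitly in the statement's proof.
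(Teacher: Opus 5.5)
Your proposal is correct and follows the same route as the paper: a Lagrangian for the simplex constraint, the stationarity condition $\log Q(s) = \sum_i w_i \log b_i(s) + \text{const}$, and normalization to the Logarithmic Opinion Pool. Your additional identity $\sum_i w_i D_{KL}(Q \parallel b_i) = D_{KL}(Q \parallel Q^*) - \log Z$, combined with Gibbs' inequality and the restriction to the common support, supplies something the paper's proof leaves out: the paper stops at the first-order condition and only asserts the ``strict minimum'', whereas your argument shows the stationary point is the unique global minimizer and that the problem is well posed when some $b_i(s)=0$.
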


\begin{proof}
To minimize (\ref{eq:rev_kl}) subject to $\sum_{s} Q(s) = 1$, we construct the Lagrangian:
\begin{equation}
    \mathcal{L} = \sum_{i=1}^{N} w_i \sum_{s \in S} Q(s) \log \frac{Q(s)}{b_i(s)} + \lambda \left( 1 - \sum_{s \in S} Q(s) \right)
\end{equation}
Differentiating with respect to $Q(s)$ and setting the derivative to zero yields:
\begin{equation}
    \frac{\partial \mathcal{L}}{\partial Q(s)} = \sum_{i=1}^{N} w_i \left( \log \frac{Q(s)}{b_i(s)} + 1 \right) - \lambda = 0
\end{equation}
Solving for $Q(s)$ isolating the $\lambda$ normalization terms provides the exact closed-form analytical minimum:
\begin{equation}
    Q(s) = \frac{\prod_{i=1}^{N} b_i(s)^{w_i}}{\sum_{s' \in S} \prod_{i=1}^{N} b_i(s')^{w_i}}
\end{equation}
This exact analytical solution is recognized in statistical literature as the Logarithmic Opinion Pool \cite{genest1986combining}. $\blacksquare$
\end{proof}

\subsection{The Numerical Quantization Penalty} 
While we have established the exact analytical solutions for KL divergence merging, constrained optimization problems in robotics are frequently solved via numerical approximation (e.g., Mixed-Integer Nonlinear Programming solvers). If one attempts to approximate these information-theoretic metrics numerically, our large-scale evaluations (Fig. \ref{fig:methods}) empirically demonstrate that numerical solvers severely degrade the mathematical fidelity of the belief merge as the grid area scales, we identify two primary mathematical sources for this systemic failure:

\begin{figure*}[thpb]
  \centering
  \IfFileExists{grid_size_effect_2d_log.pdf}{
    \includegraphics[width=0.95\textwidth]{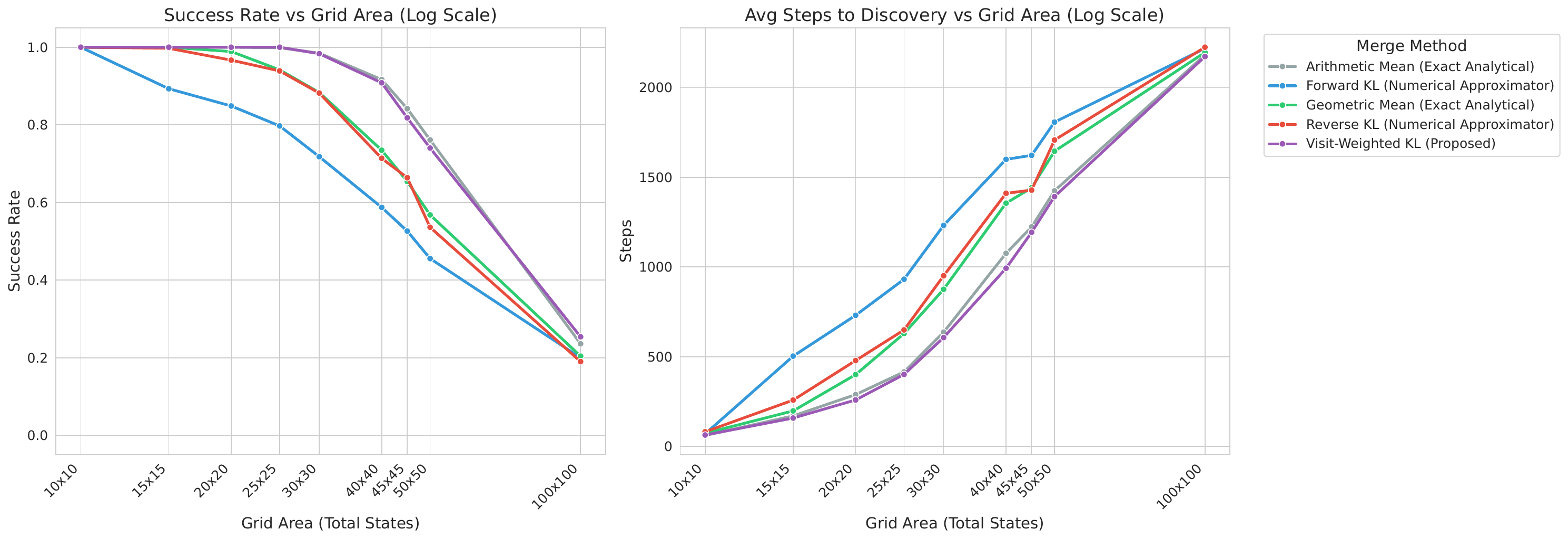}
  }{
    \framebox{\parbox{0.9\textwidth}{\centering\vspace{2cm}\textbf{Image Placeholder: grid\_size\_effect\_2d.pdf}\\\small (Ensure file is uploaded to Overleaf root directory)\vspace{2cm}}}
  }
  \caption{Effect of using linear solvers for our formulated KL and Reverse KL optimizations compared to their exact analytical derivations. Numerical approximations induce high error rates as the grid area scales.}
  \label{fig:methods}
\end{figure*}

\begin{itemize}
    \item \textbf{Artificial Noise Floors:} To prevent undefined logarithmic functions ($\log 0$), standard solvers enforce a feasibility tolerance lower bound (typically $\epsilon \approx 10^{-5}$). In large-scale state spaces, this creates an artificial, uniform noise floor. For a grid of size $|S| = 10,000$, this bounding artifact consumes $|S| \times \epsilon = 10^4 \times 10^{-5} = 0.10$ of the distribution. Consequently, a 10\% of the consensus probability mass is consumed by an artificial mathematical ``fog.'' A faint but true measurement yielding a posterior of $10^{-6}$ is entirely eclipsed by this noise floor.
    \item \textbf{Piecewise Quantization \& Gradient Limits:} Solvers approximate the convex cross-entropy objective using piecewise linear segments. This relies on the function $f(x) = x \log x$, which has the gradient $\nabla_x f(x) = \log x + 1$. As the probability mass approaches zero, the gradient approaches infinity: $\lim_{x \to 0^+} (\log x + 1) = -\infty$. Because piecewise linear sampling cannot accurately capture this infinite vertical slope near zero, the solver induces severe quantization errors for low-probability states.
\end{itemize}

Crucially, this quantization penalty critically fractures the active tracking pipeline. The standard Bayesian update is strictly multiplicative: $b_{i,t}(s) \propto P(z_{i,t} \mid s, x_{i,t}) b_{i,t-1}(s)$. If solver approximations or tolerance thresholds round a small but valid belief to exactly zero, the multiplicative nature of Bayes' rule ensures that $b_{i,t}(s) = 0$ for all future timesteps. Even if the agent later navigates directly to the target's location and receives a definitive positive observation ($P(z_{i,t}=1 \mid s) \approx 1$), the belief remains zero. The true state is permanently excluded from the search space, causing unrecoverable tracking failure. 

By entirely replacing numerical solvers with our proven exact analytical solutions (Theorems 1 and 2), we circumvent these mathematical vulnerabilities. We achieve optimal tracking accuracy while simultaneously reducing the computational overhead of the belief merge from polynomial-time matrix optimizations to strictly $\mathcal{O}(N|S|)$ scalar operations. 

However, while these derived analytical solutions (the Arithmetic Mean and the Logarithmic Opinion Pool) perfectly minimize their respective KL divergences, they remain inherently vulnerable to severe sensor noise. A single highly confident false positive from a degraded sensor can heavily skew the resulting mathematical mean. This vulnerability motivates the spatially-aware approach proposed in Section IV.

\section{visit-weighted KL}
While the Arithmetic Mean optimally solves the forward KL objective, it is highly susceptible to high rates of sensor ghosts (high $\alpha$). If one agent erroneously detects a target in an unvisited region, the arithmetic merge uniformly infects the consensus belief with this false positive.

To counter this, we propose our primary contribution: the visit-weighted KL. Instead of assigning static scalar weights $w_i$ to each agent, we dynamically calculate state-specific spatial weights based on physical visitation history. Let $v_i(s)$ be the number of times agent $i$ has physically occupied state $s$. The spatial weight $w_i(s)$ is defined as:
\begin{equation}
    w_i(s) = \frac{v_i(s) + 1}{\sum_{j=1}^{N} (v_j(s) + 1)}
\end{equation}
The consensus belief is then calculated state-by-state:
\begin{equation}
    b_{merged}(s) = \eta \sum_{i=1}^{N} w_i(s) b_i(s)
\end{equation}
where $\eta$ is a normalization constant. Notice the $+1$ additive (Laplace) smoothing applied to both the numerator and denominator of the spatial weight equation. This non-obvious regularization step is mathematically critical; it prevents zero-division errors in completely unvisited states and ensures that a baseline uniform weighting is preserved for regions of the map that no agent has yet explored. 

Ultimately, this weighting mechanism dictates that if an agent reports a high probability of a target in a state it has actively patrolled, its belief is trusted. If an agent reports a target in a distant, unvisited state, its contribution is mathematically suppressed, effectively filtering out sensor ghosts induced by severe degradation.

\begin{theorem} \label{thm:weighting}
The Visit-Weighted KL spatial distribution enforces two bounding properties on the consensus belief $b_{merged}(s)$:
\begin{enumerate}
    \item \textbf{Trusted Acceptance:} If agent $i$ thoroughly explores state $s$ relative to all other agents ($v_i(s) \to \infty$), its local belief strictly dominates the consensus.
    \item \textbf{Untrusted Rejection:} If agent $i$ has never visited state $s$ ($v_i(s) = 0$), but the state is heavily explored by the collective ($\sum_{j \neq i} v_j(s) \to \infty$), agent $i$'s contribution to the consensus at $s$ is mathematically eliminated.
\end{enumerate}
\end{theorem}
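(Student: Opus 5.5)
Both properties are statements about limits of the spatial weight $w_i(s)$ at a fixed state $s$. The plan is to study those limits directly and then push them through the merge formula $b_{merged}(s) = \eta \sum_j w_j(s) b_j(s)$.

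\textbf{Setup.} Fix $s$ and write
\[
V_{-i}(s) = \sum_{j \neq i} v_j(s),
\]
so that
\[
w_i(s) = \frac{v_i(s)+1}{v_i(s)+1 + V_{-i}(s) + (N-1)}.
\]
For every $j$ we have $0 \le b_j(s) \le 1$ and $\sum_j w_j(s) = 1$. Hence the unnormalized merge $\tilde b(s) = \sum_j w_j(s) b_j(s)$ is a convex combination of the local beliefs at $s$.

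\textbf{Trusted Acceptance.} I read ``relative to all other agents'' as: the counts $v_j(s)$, $j\neq i$, are held fixed (or grow as $o(v_i(s))$) while $v_i(s)\to\infty$. Then $w_i(s) = 1 - \frac{V_{-i}(s)+N-1}{v_i(s)+1+V_{-i}(s)+N-1} \to 1$, and so $w_j(s)\to 0$ for each $j\ne i$. Using the bound $|\tilde b(s) - b_i(s)| \le \sum_{j\neq i} w_j(s)\,|b_j(s)-b_i(s)| \le 1 - w_i(s)$, we obtain $\tilde b(s) \to b_i(s)$. In other words, agent $i$'s belief dominates the state-wise consensus at $s$.

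\textbf{Untrusted Rejection.} With $v_i(s)=0$ the weight is $w_i(s) = 1/(1+V_{-i}(s)+N-1)$, which tends to $0$ as $V_{-i}(s)\to\infty$. Its contribution satisfies $w_i(s) b_i(s) \le w_i(s) \to 0$, uniformly in the value of $b_i(s)$. This uniformity is the key point: even a maximally confident ghost with $b_i(s)=1$ is eliminated.

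\textbf{Main obstacle: the normalization.} The constant $\eta = 1/\sum_{s'} \tilde b(s')$ couples all states, so the statements about $b_{merged}(s)$ itself need more care than the statements about $\tilde b(s)$.

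First, $\tilde b(s') \ge \min_j b_j(s')$. Also, because the weights are convex and each $b_j$ sums to one, $\sum_{s'}\tilde b(s') \le \sum_{s'}\max_j b_j(s') \le N$. Together with a positive lower bound on $\sum_{s'}\tilde b(s')$, this gives $\eta \in [1/N, C]$ for a finite $C$.

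For the lower bound I would assume the local beliefs have full support. This holds for Bayesian posteriors with a full-support prior and $\alpha,\beta\in(0,1)$, as in Section II. Then $\sum_{s'}\tilde b(s') \ge \sum_{s'}\min_j b_j(s') > 0$, so $C$ is finite.

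With $\eta$ bounded, the rejection claim transfers directly: agent $i$'s contribution to $b_{merged}(s)$ is $\eta\, w_i(s) b_i(s) \to 0$.

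For the acceptance claim, I would state the conclusion as follows. $b_{merged}(s)$ is determined, up to the global constant $\eta$, by $b_i(s)$ alone. Equivalently, the ratio $b_{merged}(s)/b_{merged}(s')$ converges to $b_i(s)/b_i(s')$ whenever agent $i$ dominates the visit counts at both $s$ and $s'$. Since $\eta$ cancels in that ratio, the ratio form is the precise sense of ``strictly dominates'' that I would adopt.
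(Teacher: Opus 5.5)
Your proposal is correct and takes the same route as the paper: you push the weight limits $w_i(s)\to 1$ (other counts fixed) and $w_i(s)\to 0$ (when $v_i(s)=0$ and $\sum_{j\neq i}v_j(s)\to\infty$) through the state-wise convex combination. Your handling of $\eta$ (the bound $\eta\in[1/N,C]$ under full support, and the ratio form of acceptance) adds rigor the paper omits: the paper simply asserts $b_{merged}(s)\to b_i(s)$, which holds for the unnormalized sum $\tilde b(s)$ or when agent $i$ dominates at every state (so $\eta\to 1$), but not in general.
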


\begin{proof}
For Property 1, we evaluate the limit of the spatial weight $w_i(s)$ as agent $i$'s visitation count $v_i(s)$ grows infinitely large compared to the rest of the team:
\begin{equation}
    \lim_{v_i(s) \to \infty} \frac{v_i(s) + 1}{v_i(s) + 1 + \sum_{j \neq i} (v_j(s) + 1)} = 1
\end{equation}
Because the weights sum to unity, $w_j(s) \to 0$ for all $j \neq i$. Consequently, $b_{merged}(s) \to b_i(s)$.

For Property 2, we evaluate the limit where agent $i$ has zero visits ($v_i(s) = 0$) but the collective visitation $\sum_{j \neq i} v_j(s)$ approaches infinity:
\begin{equation}
    \lim_{\sum_{j \neq i} v_j(s) \to \infty} \frac{1}{1 + \sum_{j \neq i} (v_j(s) + 1)} = 0
\end{equation}
Thus, $w_i(s) \to 0$. Any erroneous high-probability reading a false positive reported by agent $i$ in this unvisited state is multiplied by zero, effectively filtering out sensor noise induced by severe degradation without discarding agent $i$'s valid readings in states it has actually explored. $\blacksquare$
\end{proof}

\section{Planning and Control Architecture}

\subsection{Overcoming MCTS Memory Explosion via MPC}
Active information gathering requires agents to select actions that maximize the expected reduction in belief entropy. In highly decentralized multi-agent scenarios, solving this via Monte Carlo Tree Search (MCTS) quickly becomes computationally intractable due to severe memory scaling constraints. Because each node in the search tree must store a discrete probability distribution over the entire state space to compute expected entropy, the memory footprint scales as $\mathcal{O}(I \cdot |S|)$, where $I$ is the number of search iterations. However, to find a statistically meaningful policy, $I$ must scale proportionally to the joint action space, which branches exponentially as $\mathcal{O}(|\mathcal{A}|^{N \cdot D})$, where $|\mathcal{A}|$ is the individual action space, $N$ is the number of agents, and $D$ is the rollout depth. In massive state spaces (e.g., $|S| = 10,000$ cells), exploring this joint space to a useful depth rapidly induces Out-Of-Memory (OOM) faults, using 200GB on turing cluster where each computing node was assigned 2GB, rendering deep multi-agent rollouts practically impossible.

To maintain strict memory bounds while enabling active tracking, we deployed a fixed-horizon Model Predictive Control (MPC) planner. Our MPC evaluates joint action candidates sequentially using a depth-first bounded lookahead, explicitly maximizing the expected Information Gain. By evaluating candidate trajectories one at a time and discarding intermediate belief maps rather than maintaining a persistent tree structure, this approach yields a strictly bounded space complexity of $\mathcal{O}(H \cdot |S|)$, which reduces to $\mathcal{O}(|S|)$ for a fixed horizon $H$. While the algorithmic time complexity remains exponential with respect to the horizon ($\mathcal{O}(|S| \cdot |\mathcal{A}|^{N \cdot H})$), this formulation entirely circumvents the RAM exhaustion endemic to belief-space MCTS, successfully shifting the computational bottleneck to bounded, predictable CPU time.

\subsection{Execution Framework}
The complete tracking framework, detailing the integration of the visit-weighted KL and the MPC planner, is provided in Algorithm \ref{alg:framework}.

\begin{algorithm}[thpb]
\caption{Distributed Target Tracking via visit-weighted KL and MPC}
\label{alg:framework}
\begin{algorithmic}[1]
\renewcommand{\algorithmicrequire}{\textbf{Input:}}
\REQUIRE Grid $S$, Agents $N$, Comms Interval $k$, Horizon $H$
\STATE \textbf{Initialize:} $b_i(s) \gets 1/|S|$ and $v_i(s) \gets 0 \ \forall i, s$
\FOR{time step $t = 1, \dots, T$}
    \IF{$t \bmod k == 0$ \AND $t > 0$}
        \STATE \textit{// Communication and Belief Merging Phase}
        \FOR{each state $s \in S$}
            \STATE $w_i(s) \gets \frac{v_i(s) + 1}{\sum_{j=1}^N (v_j(s) + 1)}$
            \STATE $b_{merged}(s) \gets \sum_{i=1}^N w_i(s) b_i(s)$
        \ENDFOR
        \STATE Normalize $b_{merged}$ (apply $\eta$)
        \STATE $b_i \gets b_{merged}$ for all $i \in \{1..N\}$
    \ENDIF
    \STATE \textit{// Planning Phase (Model Predictive Control)}
    \FOR{each agent $i \in \{1..N\}$}
        \STATE Generate candidate joint actions $\mathcal{A}$
        \STATE $a_i^* \gets \arg\max_{a \in \mathcal{A}} \mathbb{E}\left[ \Delta \text{Entropy}(b_i \mid a) \right]$ 
        \STATE Execute action $a_i^*$ and update position $x_{i,t}$
        \STATE $v_i(x_{i,t}) \gets v_i(x_{i,t}) + 1$
    \ENDFOR
    \STATE \textit{// Observation and Bayesian Update Phase}
    \FOR{each agent $i \in \{1..N\}$}
        \STATE Receive binary observation $z_{i,t} \in \{0, 1\}$
        \STATE $b_i(s) \gets \eta \cdot O(z_{i,t} \mid s, x_{i,t}) \cdot b_i(s) \ \forall s \in S$
    \ENDFOR
\ENDFOR
\end{algorithmic}
\end{algorithm}

As outlined in Algorithm~\ref{alg:framework}, the system operates in a strictly decentralized manner, relying on belief maps as a compressed structural proxy to avoid exhaustive history transmission. The critical mechanics of the framework are executed as follows:
\begin{itemize}
    \item \textbf{Line 3:} The intermittent communication trigger. Agents operate entirely independently in the dark until the interval $k$ is reached.
    \item \textbf{Line 6:} During the merge phase, we calculate the dynamic spatial weight $w_i(s)$ for each state. 
    \item \textbf{Lines 7 \& 10:} The analytical consensus is derived state-by-state, and the resulting fused belief immediately overwrites the corrupted local beliefs of all agents, mathematically resetting the cascading error propagation caused by the blackout.
    \item \textbf{Line 15:} The MPC evaluation. Instead of an unbounded Monte Carlo rollout, the planner evaluates candidate actions by computing the expected reduction in entropy ($\Delta \text{Entropy}$) over a bounded horizon $H$, circumventing RAM exhaustion.
    \item \textbf{Line 17:} This is the fundamental driver of our proposed visit-weighted method. Immediately after executing an action, the agent increments its local spatial visitation counter $v_i(x_{i,t})$. This ensures that during the next communication window (Line 6), the agent's confidence regarding this specific spatial coordinate is prioritized, allowing the team to filter out distant sensor noise reported by other agents.
    \item \textbf{Line 22:} The standard Bayesian update. Agents continuously apply the known sensor degradation parameters ($\alpha, \beta$) via the observation model $O(z_{i,t} \mid s, x_{i,t})$ to refine their local maps between syncs.
\end{itemize}

\section{Methodological Justification: Analytical Formulations vs. Learning-Based Approaches}
 We justify our reliance on first-principles mathematical optimization as follows.

\subsection{The Fallacy of the Assumed MDP}
Methods that assume prior knowledge of the target's transition dynamics $T(s' | s, a)$ or the environment's reward function \cite{kaelbling1998planning, amato2013decentralized} suffer from a severe lack of generalizability. In real-world adversarial or search-and-rescue scenarios, the target's policy is rarely a stationary Markovian process. Any deviation by the target results in heavily biased prior beliefs. Our analytical approach circumvents this by directly minimizing the mathematical uncertainty (entropy) of the state space, providing an intrinsic and universally applicable objective that requires zero prior environmental knowledge.

\subsection{Cascading Error Propagation}
To avoid continuous communication, decentralized MDP approaches often require agents to simulate the internal states of their peers. However, if Agent A experiences an uncommunicated sensor noise and alters its trajectory, Agent B's simulation of Agent A becomes entirely detached from reality. Because Bayesian updates are multiplicative, any initial misalignment in state estimation compounds exponentially over the time horizon. Our method assumes zero prior knowledge of allied trajectories during blackouts. Discrepancies are mathematically resolved at the exact moment of reconnection via the visit-weighted KL divergence, resetting the error cascade without relying on simulated peers.

\subsection{The Fragility of Deep Learning in Unstructured Noise}
DRL methods implicitly map observations to merged beliefs. While powerful in stable environments, they exhibit high fragility in our target domain. Intermittent communication results in highly variable temporal gaps between updates, inducing a distributional shift that neural networks struggle to generalize across \cite{lowe2017multi}. Furthermore, a sudden spike in sensor false positives often falls outside the training distribution, causing the network to confidently chase ghosts. By framing the tracking problem through analytical optimization, we guarantee that the fused belief strictly obeys the axioms of probability without the unpredictable failure modes of a black-box neural network.

\section{Experimental Setup and Sensitivity Analysis}
To rigorously validate our formulated analytical derivations and the visit-weighted KL strategy, we deployed a large-scale distributed evaluation framework on a High-Performance Computing (HPC) cluster.

\subsection{General Evaluation Parameters}
To explore the bounds of our optimization methodologies, we conducted 32,400 parallel simulation trials across the following rigorous parameter space:
\begin{itemize}
    \item \textbf{Grid Sizes ($|S|$):} Scaled from $10 \times 10$ up to $100 \times 100$ states (10,000 distinct cells).
    \item \textbf{Agent Count ($N$):} Evaluated configurations of 2 and 3 agents.
    \item \textbf{Target Patterns:} Tested against Stationary, Random Walk, Evasive, and Patrol trajectories.
    \item \textbf{Communication Intervals ($k$):} Evaluated at $\{0, 5, 10, 25, 50, 100, 200, 500, \infty\}$ steps. Here, $k=0$ represents continuous full communication, while $k=\infty$ represents complete communication denial (independent operation).
    \item \textbf{Merge Methods:} We benchmarked five distinct approaches to clarify our contribution:
    \begin{enumerate}
        \item \textit{Forward KL:} The Forward KL objective solved via numerical linear approximation.
        \item \textit{Reverse KL:} The Reverse KL objective solved via numerical linear approximation.
        \item \textit{Arithmetic Mean:} Our derived exact analytical solution for Forward KL.
        \item \textit{Geometric Mean:} Our derived exact analytical solution for Reverse KL.
        \item \textit{Visit-Weighted KL (Proposed):} Our novel, spatially-aware extension of the Arithmetic Mean.
    \end{enumerate}
    \item \textbf{Maximum Steps ($T$):} A strict limit of $2500$ steps was imposed. If the target was not physically discovered within this timeframe, the trial was recorded as a failure.
    \item \textbf{MPC Horizon ($H$):} Agents utilized a lookahead horizon of 3 steps. While seemingly short, calculating the expected entropy reduction across a joint-action space in a decentralized grid induces exponential branching complexity. A horizon of 3 proved to be the maximum computationally feasible limit that successfully balanced the necessary foresight for active search against the risk of RAM exhaustion in 10,000-state grids.
    \item \textbf{Execution Integrity:} To ensure a rigorous evaluation of the MPC planner's pure active-search capabilities, all random exploration fallbacks and heuristic shortcut evaluations were explicitly disabled. Agents strictly executed the entropy-minimizing trajectories dictated by the math. We ran 10 independent, uniquely seeded trials per configuration combination.
\end{itemize}

\subsection{Large-Scale General Evaluation Results}
Prior to isolating extreme sensor degradation profiles, we evaluated all five merging strategies across the comprehensive parameter space detailed above (32,400 baseline trials) under standard noise conditions ($\alpha=0.10, \beta=0.20$). Tables~\ref{tab:general_summary}~\ref{tab:feature_wins} illustrates the aggregated ``Win/Tie'' counts—defined as the specific method achieving the absolute highest success rate or lowest discovery steps within a distinct configuration block.

The comprehensive evaluation confirmed our theoretical hypotheses presented in Section III.C. First, the pure analytical methods (Arithmetic and Geometric Mean) strictly outperformed their numerically approximated equivalents (Forward KL and Reverse KL) across almost all grid scales. This empirically demonstrates the severity of the numerical quantization penalties.

Most importantly, our proposed visit-weighted KL strategy universally dominated the efficiency metric (lowest steps to discovery) across all agent counts, movement patterns, and communication intervals. By dynamically suppressing the influence of agents reporting from uninformative or unvisited regions, the visit-weighted KL allowed the multi-agent team to converge on the true target's location significantly faster than static scalar weighting methods, even prior to the introduction of extreme sensor noise.

\begin{table}[t]
\centering
\setlength{\tabcolsep}{3pt} 
\begin{threeparttable}
\caption{Overall Algorithm Performance Across General Configurations}
\label{tab:general_summary}
\footnotesize 
\begin{tabular}{@{}lccc@{}}
\toprule
\textbf{Merge Method} & \textbf{Avg. Succ.} & \textbf{Succ. Wins} & \textbf{Effic. Wins} \\
\midrule
Standard KL & 0.617 $\pm$ 0.330 & 155 & 53 \\
Reverse KL & 0.740 $\pm$ 0.306 & 228 & 56 \\
Geometric Mean & 0.753 $\pm$ 0.300 & 237 & 78 \\
Arithmetic Mean & \textbf{0.862} $\pm$ 0.253 & \textbf{404} & 191 \\
\textbf{Visit-Weighted (Ours)} & 0.857 $\pm$ 0.249 & 394 & \textbf{237} \\
\bottomrule
\end{tabular}
\begin{tablenotes}
\small
\item \textit{Note:} While achieving an overall success rate statistically comparable to the Arithmetic Mean baseline, our proposed Visit-Weighted KL strictly dominates in search efficiency, securing the absolute lowest steps to discovery across the majority of configurations. Variance represents one standard deviation across the tested parameter space.
\end{tablenotes}
\end{threeparttable}
\end{table}

\begin{table}[t]
\centering
\begin{threeparttable}
\caption{Number of Wins (Lowest Steps) Broken Down by Configuration Feature (Excluding Continuous \& No-Comm)}
\label{tab:feature_wins}
\setlength{\tabcolsep}{4pt} 
\small 
\begin{tabular}{@{}lcccc@{}}
\toprule
\textbf{Feature} & \textbf{Frw. KL} & \textbf{Rev. KL} & \textbf{Arith. Mean} & \textbf{Visit-Weighted} \\
\midrule
\multicolumn{5}{l}{\textbf{Number of Agents}} \\
\midrule
\textit{2} & 22 & 21 & 99 & \textbf{115} \\
\textit{3} & 31 & 35 & 92 & \textbf{122} \\
\midrule
\multicolumn{5}{l}{\textbf{Target Pattern}} \\
\midrule
\textit{evasive} & 10 & 10 & 44 & \textbf{68} \\
\textit{patrol} & 9 & 9 & \textbf{60} & 59 \\
\textit{random} & 19 & 24 & \textbf{43} & 35 \\
\textit{stationary} & 15 & 13 & 44 & \textbf{75} \\
\midrule
\multicolumn{5}{l}{\textbf{Communication Interval}} \\
\midrule
\textit{5.0} & 5 & 5 & 17 & \textbf{39} \\
\textit{10.0} & 4 & 4 & 24 & \textbf{36} \\
\textit{25.0} & 1 & 5 & 23 & \textbf{37} \\
\textit{50.0} & 7 & 3 & 25 & \textbf{27} \\
\textit{100.0} & 7 & 5 & \textbf{30} & \textbf{30} \\
\textit{200.0} & 8 & 11 & \textbf{36} & 29 \\
\textit{500.0} & 21 & 23 & 36 & \textbf{39} \\
\midrule
\multicolumn{5}{l}{\textbf{Grid Size}} \\
\midrule
\textit{10x10} & 22 & 18 & \textbf{36} & 26 \\
\textit{15x15} & 10 & 7 & 15 & \textbf{32} \\
\textit{20x20} & 7 & 7 & 20 & \textbf{27} \\
\textit{25x25} & 2 & 6 & 24 & \textbf{26} \\
\textit{30x30} & 0 & 1 & 21 & \textbf{29} \\
\textit{40x40} & 0 & 3 & 17 & \textbf{34} \\
\textit{45x45} & 2 & 4 & 18 & \textbf{23} \\
\textit{50x50} & 1 & 2 & 23 & \textbf{28} \\
\textit{100x100} & 9 & 8 & \textbf{17} & 12 \\
\bottomrule
\end{tabular}
\begin{tablenotes}
\small
\item \textit{Note:} The Visit-Weighted KL maintains dominant search efficiency across nearly all sub-configurations. Trivial continuous ($k=0$) and entirely independent ($k=\infty$) communication extremes have been excluded to isolate intermittent performance.
\end{tablenotes}
\end{threeparttable}
\end{table}

\subsection{Targeted Sensitivity Analysis Parameters}
To deeply analyze performance within the most promising subsets of the state space, we conducted another 15,120 simulations where we isolated a focused ``Sweet Spot'' configuration for rigorous sensitivity analysis:
\begin{itemize}
    \item \textbf{Grid Sizes:} $\{15\times15, 20\times20, 25\times25, 30\times30, 40\times40, 45\times45, 50\times50\}$.
    \item \textbf{Target Patterns:} Isolated to Stationary and Evasive, representing the absolute lower and upper bounds of target difficulty.
    \item \textbf{Communication Intervals:} Isolated to $\{5, 10, 25\}$ steps, representing moderate to severe communication intermittency.
    \item \textbf{Benchmarked Methods:} We directly compared the Arithmetic Mean (the absolute mathematical minimum of the Forward KL and our strongest baseline) against the proposed visit-weighted KL.
\end{itemize}

To prove robustness against environmental degradation, we tested these configurations against five distinct sensor noise profiles, tracking the false positive rate ($\alpha$) and false negative rate ($\beta$):
\begin{enumerate}
    \item \textit{High Quality:} $\alpha=0.05, \beta=0.10$
    \item \textit{Baseline:} $\alpha=0.10, \beta=0.20$
    \item \textit{Degraded:} $\alpha=0.20, \beta=0.30$
    \item \textit{Ghost-Heavy:} $\alpha=0.30, \beta=0.10$
    \item \textit{Perfect Sensing:} $\alpha=0.00, \beta=0.00$
\end{enumerate}

\subsection{Robustness to Sensor Degradation}
As shown in Table~\ref{tab:ablation_wins} and Table~\ref{tab:ablation_metrics}, under perfect sensing conditions ($\alpha=0, \beta=0$), the standard Arithmetic Mean and our visit-weighted KL performed comparably. However, in heavily degraded environments (e.g., \textit{Ghost-Heavy}, $\alpha=0.30, \beta=0.10$), the Arithmetic Mean experienced efficiency loss. The visit-weighted KL successfully suppressed the resulting noise, significantly reducing the average steps to discovery, particularly as the grid state-space expanded.

\begin{table}[t]
\centering
\begin{threeparttable}
\caption{Sensitivity Analysis: Total Wins by Noise Profile}
\label{tab:ablation_wins}
\begin{tabular}{@{}lcccc@{}}
\toprule
& \multicolumn{2}{c}{\textbf{Success Wins}} & \multicolumn{2}{c}{\textbf{Efficiency Wins}} \\
\cmidrule(lr){2-3} \cmidrule(l){4-5}
\textbf{Noise Profile} & \textbf{Ari. Mean} & \textbf{Visit-Weight} & \textbf{Ari. Mean} & \textbf{Visit-Weight} \\
\midrule
\textit{High Quality} & \textbf{73} & 69 & 35 & \textbf{49} \\
\textit{Baseline} & \textbf{71} & 63 & 22 & \textbf{62} \\
\textit{Degraded} & 43 & \textbf{81} & 1 & \textbf{83} \\
\textit{Ghost Heavy} & \textbf{69} & 64 & 28 & \textbf{56} \\
\textit{Perfect Sensing} & 49 & \textbf{67} & 35 & \textbf{49} \\
\bottomrule
\end{tabular}
\begin{tablenotes}
\small
\item \textit{Note:} While the standard Arithmetic Mean remains competitive in Success Wins under lower noise conditions, the Visit-Weighted KL strictly dominates Efficiency Wins (lowest steps to discovery) across all noise profiles, particularly excelling in the highly \textit{Degraded} environment.
\end{tablenotes}
\end{threeparttable}
\end{table}

\begin{table}[t]
\centering
\begin{threeparttable}
\caption{Sensitivity Analysis: Performance Metrics Across Sensor Noise Profiles}
\label{tab:ablation_metrics}
\begin{tabular}{@{}llcc@{}}
\toprule
& & \multicolumn{2}{c}{\textbf{Merge Method}} \\
\cmidrule(l){3-4}
\textbf{Noise Profile} & \textbf{Metric} & \textbf{Arithmetic Mean} & \textbf{Visit-Weighted} \\
\midrule
\multirow{2}{*}{\textit{High Quality}} & Success Rate & \textbf{0.965} & 0.961 \\
& Avg Steps & 595.8 & \textbf{580.4} \\
\midrule
\multirow{2}{*}{\textit{Baseline}} & Success Rate & \textbf{0.938} & 0.930 \\
& Avg Steps & 781.1 & \textbf{710.3} \\
\midrule
\multirow{2}{*}{\textit{Degraded}} & Success Rate & 0.774 & \textbf{0.880} \\
& Avg Steps & 1191.1 & \textbf{955.2} \\
\midrule
\multirow{2}{*}{\textit{Ghost Heavy}} & Success Rate & \textbf{0.936} & 0.933 \\
& Avg Steps & 720.5 & \textbf{662.6} \\
\midrule
\multirow{2}{*}{\textit{Perfect Sensing}} & Success Rate & 0.859 & \textbf{0.889} \\
& Avg Steps & 791.4 & \textbf{734.9} \\
\bottomrule
\end{tabular}
\begin{tablenotes}
\small
\item \textit{Note:} As the false positive and false negative rates scale (e.g., the \textit{Degraded} profile), the standard Arithmetic Mean experiences significant efficiency and success degradation. The Visit-Weighted KL actively suppresses this noise, maintaining robust tracking performance.
\end{tablenotes}
\end{threeparttable}
\end{table}

\subsection{Resilience to Intermittent Communication}
We explicitly evaluated performance against prolonged communication blackouts. As the communication interval increased, standard baseline methods exhibited a sharp spike in prediction error upon reconnection, as the accumulated noise overwhelmed the merge. Conversely, the spatially-aware nature of the visit-weighted KL maintained a remarkably flat efficiency curve, proving that agents can operate entirely decoupled for extended periods and still successfully extract the true target location from heavily corrupted local beliefs.

\section{Conclusion}
This work establishes that our formulated exact analytical solutions for KL divergence merging entirely circumvent the quantization errors inherent in standard numerical solvers. Furthermore, by introducing the visit-weighted KL, we provide a mathematically rigorous, spatially-aware fusion strategy that thrives in the exact conditions where traditional multi-agent systems and learning-based approximations fail: highly noisy sensors and severely restricted communication. Future work will investigate expanding spatial weighting to account for anisotropic sensor decay and heterogeneous agent modalities.

\section*{Acknowledgment}
The authors at the Automata Lab would like to thank Worcester Polytechnic Institute (WPI) Turing cluster for computational resources and support.

\bibliographystyle{IEEEtran}

\end{document}